\newtheorem{theorem}{Theorem}
\newtheorem*{remark}{Remark}
\newtheorem{definition}{Definition}[section]
\definecolor{myblue}{RGB}{173, 216, 230}
\definecolor{mygray}{RGB}{211, 211, 211}
\newmdenv[
  skipabove=10pt,
  skipbelow=10pt,
  backgroundcolor=myblue,
  linecolor=myblue,
  linewidth=2pt,
  roundcorner=10pt,
  innertopmargin=10pt,
  innerbottommargin=10pt,
  innerleftmargin=15pt,
  innerrightmargin=15pt,
  leftmargin=0pt,
  rightmargin=0pt,
]{mybox}
\newcommand{\usermessage}[1]{\begin{mybox}\textbf{User:} #1\end{mybox}}
\newcommand{\botmessage}[1]{\begin{mybox}[backgroundcolor=mygray]\textbf{Bot:} #1\end{mybox}}
\title{Counting Reward Automata: Sample Efficient Reinforcement Learning Through the Exploitation of Reward Function Structure}
\author {
    Tristan Bester,
    Benjamin Rosman,
    Steven James,
    Geraud Nangue Tasse
}
\begin{document}

\maketitle

\begin{abstract}
We present counting reward automata---a finite state machine variant capable of modelling any reward function expressible as a formal language. Unlike previous approaches, which are limited to the expression of tasks as regular languages, our framework allows for tasks described by unrestricted grammars. We prove that an agent equipped with such an abstract machine is able to solve a larger set of tasks than those utilising current approaches. We show that this increase in expressive power does not come at the cost of increased automaton complexity. A selection of learning algorithms are presented which exploit automaton structure to improve sample efficiency. We show that the state machines required in our formulation can be specified from natural language task descriptions using large language models. Empirical results demonstrate that our method outperforms competing approaches in terms of sample efficiency, automaton complexity, and task completion. 
\end{abstract}

\section{Introduction}
In recent years, reinforcement learning (RL) has achieved a number of successes in complex domains ranging from video games \cite{badia2020agent57} to robotics \cite{kumar2016optimal,kalashnikov2018qt}. However, these applications have largely been constrained to relatively simple, short-horizon tasks. This is a consequence of the techniques used for learning and reasoning, where approaches have been constrained to strategies relying on end-to-end learning systems. This restricts agents from accessing information about the structure of the problem which is known to designers \cite{marcus2020next}. Consequently, the agent is required to unnecessarily learn information which could have been provided, resulting in poor sample efficiency \cite{marcus2019rebooting, arjona2019rudder}

\textit{Neuro-Symbolic Artificial Intelligence} is a subfield of AI that focuses on the integration of neural and symbolic methods with the intention of addressing the inherent weaknesses present in either of the approaches. It has been argued that the rich cognitive models necessary to solve temporally extended tasks require a combination of these approaches \cite{valiant2008knowledge}. Hierarchical reinforcement learning (HRL) and state machine-based approaches combine symbolic reasoning with machine learning \cite{mitchener2022detect}. These techniques are congruent with the ``best of both worlds'' perspective at the core of neuro-symbolic learning. Neural systems excel in training while remaining robust to noise in the data. Symbolic systems excel in the representation and manipulation of explicit knowledge (facilitating human-understandable explanations) while supporting provably correct reasoning procedures \cite{sarker2021neuro}.

While HRL \cite{barto2003recent} and state machine-based approaches \cite{icarte2018using} have been proposed as potential solutions to learning long-horizon tasks, both methods are hindered by several shortcomings. HRL methods have traditionally been limited by practical challenges including exploration \cite{kulkarni2016hierarchical} and reward definition \cite{eysenbach2018diversity}. In contrast, state machine-based approaches have shown significant potential in learning optimal policies for temporally extended tasks \cite{icarte2022reward}. Unfortunately, these methods can only express a small set of tasks, limiting their application to a restricted set of problems.

We present a solution to long-horizon RL designed to address the limitations of current state-machine-based approaches. In this paper, we define a novel state machine variant known as a \textit{Counting Reward Automaton (CRA)} capable of modelling any computer algorithm \cite{Hopcroft_Ullman_1995}. Our formulation is strongly reliant on both neural and symbolic methods. We rely on explicit symbol manipulation in the form of graph algorithms and natural language for task specification. We make the following contributions: (i) we propose counting reward automata as a novel abstract machine variant capable of modelling reward functions expressible in any formal language; (ii) we show that the state machines produced by this approach are both intuitive to specify through the use of \textit{Large Language Models (LLMs)} as well as significantly simpler than those produced by alternative approaches; (iii) we describe how the sample efficiency of any off-policy algorithm can be improved through the use of counterfactual reasoning. A modified version of Q-learning is provided as an illustrative example; (iv) we discuss the conditions under which the proposed algorithms are guaranteed to converge to optimal policies and demonstrate their utility in several complex domains requiring long-horizon plans.

\section{Background}
\subsection{MDPs, NDMPs, and RDPs}
A \textit{Markov Decision Process (MDP)} is a tuple $M = \langle S, A, T, \gamma, R \rangle$ where $S$ is the set of states which may be occupied by the agent, $A$ is the set of actions, $T: S \times A \to \Pi(S)$ is the transition function which returns a distribution over next states given action $a$ is executed in state $s$, $\gamma \in [0,1]$ is a discount factor and  $R: S \times A \to \mathbb{R}$ is a reward function representing the task to be solved. The agent is required to learn a Markov policy $\pi$ that performs action selection to maximise cumulative reward. The value function encodes the expected return associated with being in a given state. That is the expected total discounted reward when the agent is in a given state $s$ and selecting actions according to some policy $\pi$: $V^\pi(s) = \mathbb{E}^\pi[\sum_{t=0}^{\infty} \gamma^tR(s_t, a_t, s_{t+1})]$. An alternative value function is the action-value function $Q^\pi(s,a)$, which represents the expected total discounted reward when the agent executes action $a$ in state $s$ and follows policy $\pi$ thereafter. 

A Regular Decision Process \cite{brafman2019planning} is a restricted \textit{Non-Markovian Decision Process (NMDP)}. An NDMP is defined identically to an MDP, except that the domains of $T$ and $R$ are finite sequences of states instead of single states: $T \colon S^+ \times A \times S \to \Pi(S)$ and $R: S^+ \times A \to \mathbb{R}$. In the RDP formulation, the dependence on history is restricted to regular functions.

\subsection{Reward Machines}

A reward machine (RM) is a finite state machine that describes the internal structure of the reward function \cite{icarte2022reward}. An RM operates by mapping abstracted descriptions of the current environment state to reward functions. This allows a single environmental interaction $\langle s, a, s^\prime \rangle$ to be assigned different reward values based on the active RM state.

A reward machine makes use of a set of propositional symbols $\mathcal{P}$ which encode high-level events in an environment. The agent is able to detect the events in $\mathcal{P}$. The subset of events taking place in the environment at each time step is used as the input to the machine. The \textit{labelling function} $L: S \times A \times S \mapsto 2^\mathcal{P}$ determines which events are taking place at a given instant in time. That is, the labelling function is used to assign truth values to the propositions in $\mathcal{P}$ given an environmental experience. A reward machine defined in an environment with states $S$ and actions $A$ is a tuple $\mathcal{R}_\mathcal{PSA} = \langle U, u_0, F, \delta_u, \delta_r \rangle$, where $U$ is a finite set of states, $u_0 \in U$ is the initial state, $F$ is a set of terminal states with $(U \cap F = \emptyset)$. The state-transition function is defined by the mapping $\delta_u: U \times 2^\mathcal{P} \mapsto U \cup F$. The state-reward function is given by the mapping $\delta_r: U \mapsto [S \times A \times S \mapsto \mathbb{R}]$. A reward machine $\mathcal{R}_\mathcal{PSA}$ begins in the initial state $u_0$. At each time step, the machine receives as input the set of high-level events taking place in the environment. Upon receiving this information, the machine transitions between states based on the state-transition function and outputs a reward function as defined by the state-reward function.

\section{Counting Reward Automata}
To address the limited expressive power of reward machines, we introduce a novel abstract machine known as a \textit{Counting Reward Automaton} (CRA) that is capable of modelling reward functions expressible as recursively enumerable languages. In contrast to RMs, this framework supports tasks described by unrestricted as opposed to regular grammars. Consequently, the formulation is compatible with any NMDP in which the dependence on history is expressible in a formal language. As a result, this framework may be applied to a much larger number of problems than RMs which are only applicable in RDPs.

\subsection{Counter Machines}
Counter automata may be thought of as finite state automata augmented by a finite number of counter variables. While processing a string, the machine can update the values of its counters based on inputs, and the counters can in turn influence the machine's state transitions. A constraint imposed on the machine is that the value of its counters cannot be decremented below zero. 

For $m \in \mathbb{Z}$, let $+m$ denote the function as defined in lambda calculus $\lambda x. x + m$. This function is used to modify counter values in response to machine transitions. A $k$-counter counter machine \cite{fischer1968counter} is defined as a tuple $\langle Q, F, \Sigma, \delta, q_0 \rangle$ where $Q$ is a finite set of machine states, $F$ is a finite set of terminal states, $\Sigma$ is the input alphabet, $q_0$ is the initial machine state and $\delta$ is the state transition function defined as 
\[
    \delta \colon Q \times \{ \Sigma \cup \{ \varepsilon \} \} \times \{0,1\}^k \to Q \times \{+m : m \in \mathbb{Z}\}^k
\]
where $k$ is the number of counter variables and $\varepsilon$ is used to denote an empty string. The empty string allows the machine to transition without reading an input symbol. 

A counter machine processes an input string one symbol at a time. For each input symbol, we use $\delta$ to update the machine configuration based on the input symbol and current machine configuration. A machine configuration is defined as $\langle q, \boldsymbol{c} \rangle \in Q \times \mathbb{N}^k$. Upon reading input symbol $\sigma$ the machine transitions according to 
\[
    \langle q^\prime, \boldsymbol{c}^\prime \rangle = \delta(q, \sigma, Z(\boldsymbol{c})),
\]
where $Z: \mathbb{N}^k \mapsto \{0,1\}^k$ is a \textit{zero-test} function defined as 
\[
    Z(\textbf{c})_i \coloneqq 
   \begin{cases} 
      0 & \text{if } \textbf{c}_i = 0 \\
      1 & \text{otherwise.}
   \end{cases}
\]

\noindent
A worked example showing how a counter machine operates has been included in the Appendix.

\subsection{Augmenting Agents with Counter Machines}
We now introduce the CRA framework as an approach to modelling reward in NMDPs. We begin with the definition of a counting reward automaton. This is followed by the introduction of a running example, used to illustrate the operation of a CRA. Finally, we describe how the automaton can be thought of as defining a reward function on an MDP with a larger state space than that of the NDMP.

\subsubsection{Counting Reward Automaton}
A CRA is a counter machine that has been augmented with an output function. The output function returns a reward function after each machine transition. A machine transition occurs every time the agent interacts with the environment. An abstract description of this interaction is used as the input to the machine. This causes the machine to transition, producing the reward function used to reward the agent for interaction.

\setcounter{section}{1}
\begin{definition}[Counting Reward Automaton]
Given a set of environment states $S$,  a set of actions $A$ and a set of propositional symbols modelling high-level events $\mathcal{P}$, a $k$-counter counting reward automaton is defined by a tuple $\langle U, F, \Sigma, \Delta, \delta, \lambda, u_0 \rangle$, where $U$ is a finite set of non-terminal states, $F$ is a finite set of terminal states ($U \cap F = \emptyset$), $\Sigma = 2^\mathcal{P}$ is the input alphabet, $\Delta = [S \times A \times S \mapsto \mathbb{R}]$ is the output alphabet, $\delta$ is the state transition function
\[
    \delta: U \times \{\Sigma \cup \{\varepsilon\}\} \times \{0,1\}^k \mapsto \{U \cup F\} \times \{+m : m \in \mathbb{Z}^k\},
\]
$\lambda$ is the output function
\[
    \lambda: U \times \{\Sigma \cup \{\varepsilon\}\} \times \{0,1\}^k \mapsto \Delta,
\]
and $u_0$ is the initial machine state.
\end{definition}

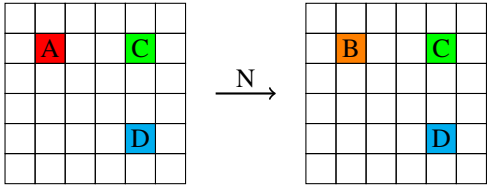
\begin{figure}[ht]
  \centering
  \begin{tikzpicture}[scale=0.4] 

    \fill[red] (1,4) rectangle (2,5);
    \fill[green] (4,4) rectangle (5,5);
    \fill[cyan] (4,1) rectangle (5,2);
    
    \foreach \x in {0,1,2,3,4,5}
      \foreach \y in {0,1,2,3,4,5}
        \draw (\x, \y) rectangle ++(1,1);

    \node at (+1.5,+4.5) {A};
    \node at (+4.5,+4.5) {C};
    \node at (+4.5,+1.5) {D};
    
    \draw[->, thick] (7,3) -- (9,3);
    \node at (8, 3.5) {N};

    \fill[orange] (11,4) rectangle (12,5);
    \fill[green] (14,4) rectangle (15,5);
    \fill[cyan] (14,1) rectangle (15,2);
    
    \foreach \x in {10,11,12,13,14,15}
      \foreach \y in {0,1,2,3,4,5}
        \draw (\x, \y) rectangle ++(1,1);

    \node at (+11.5,+4.5) {B};
    \node at (+14.5,+4.5) {C};
    \node at (+14.5,+1.5) {D};
  \end{tikzpicture}
  \caption{Illustration of the $LetterEnv$ environment, configured for the CFL experiment. The symbol A is replaced with a B after it has been observed $N$ times by the agent.} \label{fig:letterenv}
\end{figure}

\subsubsection{Example Task}
As a running example, we consider the \textit{LetterEnv} environment presented in Figure \ref{fig:letterenv}. In this environment, a symbol (letter) is associated with certain positions. Symbols may either be observed infinitely often, or replaced by another symbol after a set number of observations. The agent is able to observe two aspects of the current state, its $xy$-location as well as the associated symbol (if such a symbol exists). The agent is able to move in any of the four cardinal directions. 

For this example, the following environmental configuration is used. In each episode, the symbol $A$ may be observed a fixed number of times, before being replaced by the symbol $B$. The symbols $B,C$ and $D$ may be observed infinitely often. Specifically, the symbol $A$ may be observed $N$ times within an episode, where $N$ is a random variable with a discrete uniform distribution over the set $\mathbb{N}$. The agent is required to observe a sequence of environment symbols which corresponds to a string in the \textit{context-free language (CFL)} described by the set $L = \{A^NBC^N : N \in \mathbb{N}\}$.

\begin{remark}
As the reward model used in this example distinguishes between histories based on properties not expressible as regular expressions over the elements of the set $S \times A \times S$ (in this case counting how many times a state has been reached) it cannot be expressed as a reward machine.
\end{remark}

\subsubsection{CRA Operation}
We use three propositions, one to model the presence of each symbol in the agent's current position $\mathcal{P} = \{P_A, P_B, P_C\}$. After each interaction of the agent with the environment $\langle s, a, s^\prime \rangle$, the labelling function $L$ is used to compute the input to the machine $\sigma \in 2^\mathcal{P}$. Upon reading the input symbol $\sigma$, the machine transitions according to $\delta$ and emits a reward function based on $\lambda$. This process continues until the machine enters a terminal state.

We now describe how a CRA can be used to model the reward function for the example task. For convenience, we make use of a special case of the CRA formulation known as a \textit{Constant Counting Reward Automaton (CCRA)}. After each transition, a CCRA returns a reward value directly as opposed to a reward function which is output by a CRA. 

\begin{definition}[Constant Counting Reward Automaton]
Given a set of propositional symbols modelling high-level events $\mathcal{P}$, a constant counting reward automaton is defined as a tuple $\langle U, F, \Sigma, \Delta, \delta, \lambda, u_0 \rangle$ where $U, F, \Sigma,  \delta, \lambda$ and $u_0$ are defined as in a counting reward automaton; however, the output alphabet of the machine is defined as $\mathbb{R}$.   
\end{definition}

\begin{theorem}
For each constant counting reward automaton, there exists an equivalent counting reward automaton.
\end{theorem}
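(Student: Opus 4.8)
The plan is to exhibit, for an arbitrary constant counting reward automaton (CCRA), a counting reward automaton (CRA) that produces exactly the same reward signal along every environment trajectory, and then note that this is precisely the notion of equivalence relevant here (the two machines induce the same reward function on the product MDP described in the next subsection). Since a CCRA and a CRA differ only in their output alphabet --- $\mathbb{R}$ versus $\Delta = [S \times A \times S \mapsto \mathbb{R}]$ --- the construction simply replaces each emitted constant $r \in \mathbb{R}$ by the constant reward function $(s,a,s') \mapsto r$.

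Concretely, first I would fix a CCRA $\mathcal{A} = \langle U, F, \Sigma, \mathbb{R}, \delta, \lambda, u_0 \rangle$ with $k$ counters. I would then define a CRA $\mathcal{A}' = \langle U, F, \Sigma, \Delta, \delta, \lambda', u_0 \rangle$ sharing the same state sets, input alphabet, transition function, counter count, and initial state, with output function $\lambda' \colon U \times \{\Sigma \cup \{\varepsilon\}\} \times \{0,1\}^k \mapsto \Delta$ given by $\lambda'(u, \sigma, \boldsymbol{z}) = f_{u,\sigma,\boldsymbol{z}}$, where $f_{u,\sigma,\boldsymbol{z}}$ is the constant map $f_{u,\sigma,\boldsymbol{z}}(s,a,s') = \lambda(u,\sigma,\boldsymbol{z})$ for every $(s,a,s') \in S \times A \times S$. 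This is well-defined precisely because $\Delta$ by definition contains every map $S \times A \times S \to \mathbb{R}$, in particular all constant ones.

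Next I would verify equivalence. Because $\mathcal{A}$ and $\mathcal{A}'$ share $U, F, \Sigma, \delta, u_0$ and the same counter dimension, a routine induction on the number of machine transitions shows that on any input sequence in $\Sigma^*$ (equivalently, on any environment trajectory together with its induced label sequence) the two machines pass through identical configuration sequences $\langle u_t, \boldsymbol{c}_t \rangle$ and perform identical zero-tests $Z(\boldsymbol{c}_t)$. Hence at step $t$ the CCRA emits the value $\lambda(u_t, \sigma_t, Z(\boldsymbol{c}_t)) \in \mathbb{R}$, while $\mathcal{A}'$ emits the reward function $\lambda'(u_t, \sigma_t, Z(\boldsymbol{c}_t))$, which evaluated at the actual experience $\langle s_t, a_t, s_{t+1} \rangle$ yields exactly $\lambda(u_t, \sigma_t, Z(\boldsymbol{c}_t))$. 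Thus the two machines reward the agent identically for every interaction and terminate on exactly the same inputs, so they are equivalent.

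The step I expect to need the most care is pinning down the definition of ``equivalent'' so that the statement is non-vacuous --- i.e. declaring two reward automata equivalent when they assign the same reward to each experience along every trajectory and terminate together --- and then checking that the constant-function construction meets it; once that definition is fixed, the remainder is the induction sketched above. A minor subtlety is the treatment of $\varepsilon$-transitions, but since $\delta$ and $u_0$ are shared, these are synchronised across the two machines and cause no difficulty.
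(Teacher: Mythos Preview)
Your proposal is correct and follows essentially the same construction as the paper: keep $U, F, \Sigma, \delta, u_0$ unchanged and replace each emitted constant $r = \lambda(u,\sigma,\boldsymbol{\omega}) \in \mathbb{R}$ by the constant reward function $f_r \colon (s,a,s') \mapsto r$. The paper's proof stops at the construction, whereas you additionally spell out the notion of equivalence and sketch the inductive verification that the two machines traverse identical configuration sequences and hence emit identical rewards; this extra care is harmless and, if anything, makes the argument more complete.
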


\begin{proof}
See Appendix.
\end{proof}

\noindent
A graphical representation of the example CCRA is illustrated in Figure \ref{fig:ccra}. The machine contains two non-terminal states, one for each subtask in the specification. In the state $u_0$, the agent's objective is to observe the symbol $A$ repeatedly. However, in the state $u_1$, the agent is required to observe the symbol $C$ repeatedly. Intuitively, the automaton stores the number of $A$ symbols it has seen in its counter. After observing the symbol $B$, the machine decrements its counter each time the symbol $C$ is observed. Once the counter has been decremented to zero, the machine transitions into a terminal state and emits a reward of one. The automaton is represented as a directed graph. Each vertex in the graph represents a state in the machine, with $u_0$ being the initial state. Terminal states are represented by filled circles. Associated with each edge in the graph is a tuple $\langle \varphi, \boldsymbol{\omega}, \boldsymbol{\mu}, r \rangle$ in which $\varphi$ is a propositional logic formula over $\mathcal{P}$, the vector $\boldsymbol{\omega}$ contains the zero-tested counter states, $\boldsymbol{\mu}$ contains the counter modifiers and $r$ is the reward associated with the transition. A directed edge between the states $u_i$ and $u_j$ labelled by the tuple $\langle \varphi, \boldsymbol{\omega}, \boldsymbol{\mu}, r \rangle$ means that if the machine is in state $u_i$, the truth assignment $L(s, a, s^\prime) = \sigma$ satisfies $\varphi$, that is $\sigma \models \varphi$, and $Z(\boldsymbol{c}) = \boldsymbol{\omega}$, then the next machine state is equal to $u_j$ and the counter values are updated to $\boldsymbol{c} + \boldsymbol{\mu}$. For example, the edge between $u_0$ and $u_1$ labelled by $\langle P_B, [1], [0], 0 \rangle$ means that the machine will transition from state $u_0$ to state $u_1$ without modifying the counter value and emit a reward of one if the machine is in state $u_0$ with a non-zero counter value when the proposition $P_B$ becomes \textbf{true}. The automaton is updated after every agent-environment interaction. For example, if the agent takes action $a$ in state $s$ resulting in the following state $s^\prime$, the machine configuration is updated to $\langle u^\prime, \boldsymbol{c}^\prime \rangle = \delta(u, L(s,a,s^\prime), Z(\boldsymbol{c}))$ and the agent receives a reward of $r = \lambda(u, L(s,a,s^\prime), Z(\boldsymbol{c}))(s, a, s^\prime)$.

\begin{figure}[t!]
    \centering
    \begin{tikzpicture}[>=stealth, node distance=1cm and 3cm, on grid, auto]
        \node[state, initial, initial text = {}] (k1) {$u_0$};
        \node[state, right=of k1] (k2) {$u_1$};
        
        \node[state, below=of k1, minimum size=0.3em, fill=black] (k3) {};
        \node[state, below=of k2, minimum size=0.3em, fill=black] (k4) {};
        
        \path[->]
        (k1) edge[loop above] 
            node[yshift=0, xshift=0] {\tiny {$\langle P_A, [0], [1], 0 \rangle$}} 
            node[yshift=15, xshift=0] {\tiny $\langle P_A, [1], [1], 0 \rangle$} 
        (k1)
        (k1) edge node {\tiny $\langle P_B, [1], [0], 0 \rangle$} (k2)
        (k1) edge 
            node[yshift=0, xshift=-95] {\tiny $\langle \neg P_A \land \neg P_B, [0], [0], 0 \rangle$} 
            node[yshift=-10, xshift=-95] {\tiny $\langle \neg P_A \land \neg P_B, [1], [0], 0 \rangle$} 
        (k3)
        (k2) edge[loop above]
            node {\tiny $\langle P_C, [1], [-1], 0 \rangle$}
        (k2)
        (k2) edge 
            node [yshift=0, xshift=8] {\tiny $\langle P_B, [1], [0], 0 \rangle$}
            node [yshift=-10, xshift=8] {\tiny $\langle \tau, [0], [0], 1 \rangle$}
        (k4);
    \end{tikzpicture}
    \caption{Illustration of a CCRA used to solve the example CFL task in the $LetterEnv$ environment. The $\tau$ symbol is used to represent a tautology (a propositional formula that is always true) which conditions the corresponding transition only on the states of the counters.}\label{fig:ccra}
\end{figure}
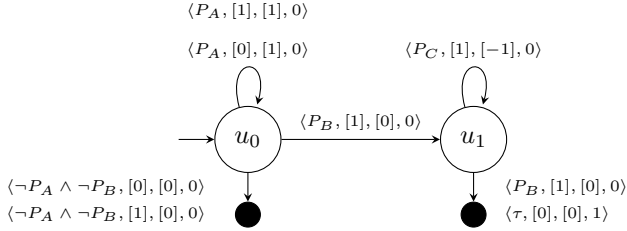

\subsubsection{Solving the Example Task}
With a CCRA in place, we proceed to discuss how a solution to the example task can be obtained. As described, the automaton specifies the reward function for the task. The reward is non-Markovian with respect to the ground environment state. That is, the agent's observation of the environment state does not contain sufficient information to determine the reward. For example, consider the case in which the agent observes that it is in the position associated with the symbol $B$. As this observation does not contain information about the sequence of symbols previously observed, it is insufficient to define the reward. However, the machine configuration can be combined with the ground environment state to produce a Markov state. As in the RM formulation, the automaton can be thought of as defining a reward function on an MDP with an alternate state space. Each state in this MDP is formed by combining an automaton configuration with a ground environment state. We refer to this MDP as the \textit{Automaton-Augmented Markov Decision Process (AAMDP)}. While the general CRA formulation is not limited, we focus on finite-horizon MDPs. That is, we enforce a fixed upper bound on trajectory length. We use the symbol $\mathcal{H}$ to denote the maximum length of any trajectory. This value can be used to compute an upper bound on the value of any machine counter. Specifically, $\mathcal{H}$ is multiplied by the maximum counter increment defined in $\delta$ to produce the upper bound denoted $\Gamma$.

\begin{definition}[Automaton-Augmented Markov Decision Process]
An automaton-augmented Markov decision process is an MDP in which the reward function is modelled through the use of a counting reward automaton. An AAMDP is a tuple $\langle S,A,T, \gamma, U, F, \Sigma, \Delta, \delta, \lambda, u_0 \rangle$ in which $S, A, T$ and $\gamma$ are defined based on an environment representation, and $U, F, \Sigma, \Delta, \delta, \lambda$ and $u_0$ are defined as in a counting reward automaton. Each AAMDP induces an equivalent MDP $\langle S_\mathcal{A}, A_\mathcal{A}, T_\mathcal{A}, \gamma_\mathcal{A}, R_\mathcal{A} \rangle$ with $A_\mathcal{A} = A$, $\gamma_{\mathcal{A}} = \gamma$,
\begin{align*}
     S_\mathcal{A} &= S \times \{U \cup F\} \times \{1, ..., \Gamma\}^k, 
\\
    T_\mathcal{A} &= P(\langle s^\prime, u^\prime, \boldsymbol{c^\prime} \rangle \; | \; \langle s, u, \boldsymbol{c} \; \rangle, a)
\\ 
    &=\begin{cases} 
      P(s^\prime | s, a) & \text{if } u \in F \text{ and } u^\prime = u, \boldsymbol{c^\prime} = \boldsymbol{c} \\
      P(s^\prime | s, a)  & \text{if } u \notin F \text{ and } \langle u^\prime, \boldsymbol{c^\prime} \rangle = \delta(u, \sigma, Z(\boldsymbol{c})) \\ 
      0 & \text{otherwise.}
   \end{cases}
\end{align*}
and $R_\mathcal{A}(s, a, s^\prime) = \lambda(u, \sigma, Z(\boldsymbol{c}))(s,a,s^\prime)$, where $\sigma \coloneqq L(s,a,s^\prime)$.
\end{definition}

\noindent
As the AAMDP is fundamentally an MDP, conventional reinforcement learning algorithms, such as Q-learning, may be used to compute a solution in the form of an optimal policy.

\subsubsection{Compatible Reward Functions}
As the CRA formulation is based on a counter machine, it is able to model both Markovian and non-Markovian reward functions to the extent that a Turing machine is able to distinguish between histories. This property holds as a two-counter counter automaton (equivalently, a two-stack pushdown automaton with a two-symbol alphabet) can simulate an arbitrary Turing machine \cite{Hopcroft_Ullman_1995}.

\subsubsection{Relationship between Counting Reward Automata and Reward Machines} Reward machines are a special case of the CRA formulation. The \textit{reward output function} of a reward machine is dependent only on the current machine state. As a result, an RM output function is equivalent to that of a CRA which returns the same reward function for all transitions out of a given machine state. The RM \textit{state transition function} is dependent only on the current machine state and the input symbol. Consequently, the state transition function is equivalent to that of a CRA which does not modify the value of its counters.

\begin{theorem}
Any reward machine can be emulated by a CRA with an equivalent number of machine states and transitions.
\end{theorem}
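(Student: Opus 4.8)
The plan is to give an explicit construction realising the informal observations stated just before the theorem: an RM output function is the special case of a CRA output function that is constant over all transitions leaving a given state, and an RM state-transition function is the special case of a CRA state-transition function that never modifies its counters. So, given a reward machine $\mathcal{R}_\mathcal{PSA} = \langle U, u_0, F, \delta_u, \delta_r \rangle$, I would build the CRA $\langle U', F', \Sigma, \Delta, \delta, \lambda, u_0' \rangle$ by taking $U' = U$, $F' = F$, $u_0' = u_0$, $\Sigma = 2^\mathcal{P}$, $\Delta = [S \times A \times S \mapsto \mathbb{R}]$, fixing any number of counters $k$ (e.g.\ $k = 1$), using no $\varepsilon$-transitions, and setting
\[
\delta(u, \sigma, \boldsymbol{\omega}) = \langle \delta_u(u, \sigma),\, +\boldsymbol{0} \rangle, \qquad \lambda(u, \sigma, \boldsymbol{\omega}) = \delta_r(u),
\]
for every $u \in U$, $\sigma \in \Sigma$ and every zero-test vector $\boldsymbol{\omega} \in \{0,1\}^k$; that is, the counters are inert and both $\delta$ and $\lambda$ ignore the zero-test argument, so that the residual counter component cannot affect behaviour.

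Next I would verify the emulation claim by induction on trajectory length. Since every counter modifier is $+\boldsymbol{0}$ and counters start at $\boldsymbol{0}$, the counter vector is identically $\boldsymbol{0}$ along any run, hence $Z(\boldsymbol{c}) = \boldsymbol{0}$ at every step. Given an environment interaction sequence $\langle s_0, a_0, s_1\rangle, \langle s_1, a_1, s_2\rangle, \ldots$ with induced labels $\sigma_t = L(s_t, a_t, s_{t+1})$, the induction shows the CRA machine-state sequence $u_0, u_1, \ldots$ coincides with that of the RM, because $u_{t+1} = \delta_u(u_t, \sigma_t)$ in both cases and both machines freeze once a terminal state in $F$ is reached. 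It then follows that the reward emitted by the CRA at step $t$, namely $\lambda(u_t, \sigma_t, \boldsymbol{0})(s_t, a_t, s_{t+1}) = \delta_r(u_t)(s_t, a_t, s_{t+1})$, equals the reward the RM emits, so the two machines produce identical reward sequences on identical inputs.

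Finally I would discharge the complexity claim. The construction leaves the state set untouched, so the CRA has $|U|$ non-terminal and $|F|$ terminal states, matching the RM exactly. For transitions, the map $(u, \sigma) \mapsto \langle \delta_u(u, \sigma), +\boldsymbol{0}\rangle$ is a bijection between the RM transitions and the CRA transitions (which are all non-$\varepsilon$ and independent of the $\{0,1\}^k$ component), so the transition counts agree as well; in particular no two distinct RM transitions collapse to the same CRA transition.

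I do not expect a genuine obstacle here: the argument amounts to exhibiting the right ``inert-counter'' embedding and checking definitions. The only points that need care are (i) pinning down precisely what ``emulate'' means --- equality of the emitted reward sequence for every environment trajectory --- which I would state explicitly at the outset, and (ii) neutralising the structural features a CRA possesses but an RM lacks ($\varepsilon$-transitions and the counter/zero-test machinery), which the construction handles by simply never invoking them and by making $\delta$ and $\lambda$ constant in the zero-test argument.
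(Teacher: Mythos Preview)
Your proposal is correct and follows essentially the same construction as the paper: copy the RM's state set, initial state, and terminal states, take $\Sigma = 2^\mathcal{P}$, let the counters be inert (never modified, so always $\boldsymbol{0}$), use no $\varepsilon$-transitions, and set $\delta(u,\sigma,\cdot)=\langle\delta_u(u,\sigma),\boldsymbol{0}\rangle$ and $\lambda(u,\sigma,\cdot)=\delta_r(u)$. Your write-up is in fact more careful than the paper's, which simply defines $\delta$ and $\lambda$ on the triples $(u,\sigma,\boldsymbol{0})$ and reads off the state and transition counts, whereas you additionally spell out what ``emulate'' means and verify it by induction on trajectory length.
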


\begin{proof}
See Appendix.
\end{proof}

\section{Learning Algorithms}
In this section, we discuss learning algorithms for the AAMDP formulation. As AAMDPs are fundamentally MDPs, we begin by discussing their compatibility with conventional reinforcement learning techniques. This is followed by the introduction of a novel learning algorithm for AAMDPs. This algorithm exploits task information encoded within the CRA of the AAMDP to increase sample efficiency. We provide pseudo-code implementations for the tabular case and discuss each algorithm's convergence guarantees.

\subsection{The AAMDP Baseline}
Each AAMDP is an MDP which takes into account the state of the CRA at each time step. As a result, the AAMDP agent not only considers the underlying ground environment state $s$ when selecting an action, but also the CRA configuration $\langle u, \boldsymbol{c} \rangle$. Regardless of this distinction, the construction satisfies the properties of an MDP. Consequently, any learning algorithm designed to learn policies for an MDP is compatible with the AAMDP formulation. Any convergence guarantees associated with such algorithms in MDPs hold by extension in AAMDPs. As these algorithms do not use automaton information when learning, it is unlikely that any increase in sample efficiency will be observed when they are applied to AAMDPs. To illustrate how a conventional reinforcement learning algorithm can be used with an AAMDP, we provide an adapted version of tabular Q-learning in the Appendix.

\subsection{Counterfactual Experiences}
We describe how task information encoded within the structure of a CRA can be used to enable more sample efficient learning. The learning algorithm we propose to exploit CRA task information is based on the \textit{Counterfactual experiences for Reward Machines (CRM)} algorithm \cite{icarte2022reward}. Our algorithm has been adapted to support the $k$-counter counter machine used in the CRA formulation. The counterfactual experiences generated by this algorithm can be used to increase the sample efficiency of any off-policy learning method.

We begin by providing an overview of the intuition behind the algorithm. We describe how task information, encoded within the automaton structure, can be used to enable more sample efficient learning. Suppose an agent in a ground environment state $s$ takes action $a$, producing a subsequent state $s^\prime$. If the CRA configuration was $\langle u, \boldsymbol{c} \rangle$ prior to the transition, then the reward signal emitted by automaton would be $r = \lambda(u, L(s, a, s^\prime), Z(\boldsymbol{c}))(s, a, s^\prime)$. However, we are able to consider every possible machine configuration prior to the transition. This counterfactual reasoning generates a set of counterfactual experiences, each of which may be used for learning. The generation of synthetic experiences is enabled by the task information encoded within the automaton structure.

In order to illustrate how counterfactual experiences can be incorporated into an off-policy learning algorithm, we present \textit{Counterfactual Q-Learning (CQL)} in Algorithm \ref{alg:cql}. It can be seen that the only adaptation required is to the Q-function update. That is, rather than updating the Q-function once based on the observed AAMDP transition, it is updated multiple times, considering every possible prior automaton configuration. The process begins with the agent executing an action and observing the next ground environment state (line 8). This information is passed to the labelling function to compute the input to the machine. This is followed by counterfactual experience generation. That is, we simulate what would have happened had the machine been in any of its non-terminal configurations when the transition took place (lines 9-10). For each experience, the corresponding next machine configuration and reward are computed (lines 11-12). This information is used to update the Q-function (lines 13-17). Finally, the machine configuration is updated based on the transition that was observed in reality. We note that in practice, the loop on line 10 can be optimised by maintaining a cache of observed counter states.

\begin{theorem}
Given an AAMDP $\mathcal{M}$, tabular CQL converges to an optimal policy for $\mathcal{M}$ in the limit (under the same conditions required for convergence of Q-learning in MDPs).
\end{theorem}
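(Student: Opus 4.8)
The plan is to show that tabular CQL is precisely asynchronous Q-learning on the finite MDP $\langle S_\mathcal{A}, A_\mathcal{A}, T_\mathcal{A}, \gamma_\mathcal{A}, R_\mathcal{A} \rangle$ induced by $\mathcal{M}$ (the ``equivalent MDP'' of the AAMDP definition), and then to invoke the classical convergence theorem for Q-learning. First I would record that this induced MDP is well-posed and finite: the imposed finite horizon $\mathcal{H}$ together with the maximum counter increment appearing in $\delta$ bounds every reachable counter value by $\Gamma$, so $S_\mathcal{A} = S \times \{U \cup F\} \times \{1,\dots,\Gamma\}^k$ is finite whenever $S$ and $A$ are, and the reward is bounded because $\Delta$ has finitely many distinct outputs over the finite transition structure. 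This is needed so that the tabular representation and the standard convergence hypotheses apply verbatim.

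Second, and this is the \emph{crux}, I would argue that every counterfactual experience produced by CQL (lines~9--17 of Algorithm~\ref{alg:cql}) is a statistically valid sampled transition of this induced MDP. The structural fact that makes this work is that the environment kernel $P(s' \mid s, a)$ does not depend on the automaton configuration, while the automaton update is a deterministic function of $(u, \sigma, Z(\boldsymbol{c}))$ with $\sigma = L(s,a,s')$. Hence, when the agent really executes $a$ in ground state $s$ and observes $s'$, the synthetic tuple $\langle s, u, \boldsymbol{c} \rangle \xrightarrow{a} \langle s', u', \boldsymbol{c}' \rangle$ with $\langle u', \boldsymbol{c}' \rangle = \delta(u, L(s,a,s'), Z(\boldsymbol{c}))$ and reward $\lambda(u, L(s,a,s'), Z(\boldsymbol{c}))(s,a,s')$ is distributed exactly as $T_\mathcal{A}(\cdot \mid \langle s, u, \boldsymbol{c}\rangle, a)$ with reward $R_\mathcal{A}$, \emph{for every} non-terminal $(u,\boldsymbol{c})$. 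Consequently each CQL update is an unbiased sample Bellman backup for the pair $(\langle s, u, \boldsymbol{c}\rangle, a)$ in the induced MDP, identical in form to an ordinary Q-learning update, merely applied to several state-action pairs simultaneously; terminal automaton states are absorbing with fixed (zero) continuation and so contribute only boundary conditions requiring no learning.

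Third, I would discharge the asynchronous-update bookkeeping. Under the standard Q-learning conditions inherited from the theorem statement --- Robbins--Monro step sizes ($\sum_t \alpha_t = \infty$, $\sum_t \alpha_t^2 < \infty$, indexed per state-action pair) and every state-action pair updated infinitely often --- the asynchronous stochastic approximation result (Tsitsiklis; Jaakkola, Jordan, and Singh; Watkins and Dayan) yields $Q \to Q^\star$ with probability one, so the greedy policy is optimal for $\mathcal{M}$. The infinite-visitation hypothesis transfers: ``the same conditions required for convergence of Q-learning in MDPs'' means the behaviour policy visits every ground pair $(s,a)$ infinitely often, and since a counterfactual update for $(\langle s, u, \boldsymbol{c}\rangle, a)$ is triggered for every non-terminal $(u,\boldsymbol{c})$ on each real visit to $(s,a)$, every non-terminal augmented pair is updated infinitely often.

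The main obstacle I anticipate is tightening the second step: within a single environment step CQL issues many updates that all reuse the same sampled successor $s'$, so the updates are correlated across this batch. One must make explicit that this does not break convergence --- which it does not, because the asynchronous formulation of the convergence theorem only requires that each individual update, conditioned on the history, be an unbiased sample backup for its own pair, not that distinct updates be mutually independent. A secondary point to state carefully is that $\Gamma$ genuinely upper-bounds every reachable counter value (so no update ever references a state outside the tabular domain), which is immediate from the finite-horizon assumption but worth spelling out.
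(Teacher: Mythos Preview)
Your proposal is correct and follows essentially the same approach as the paper, which simply observes that the counterfactual experiences are sampled according to the AAMDP transition distribution $P(\langle s', u', \boldsymbol{c}'\rangle \mid \langle s, u, \boldsymbol{c}\rangle, a) = P(s' \mid s, a)$ and then invokes the Watkins--Dayan convergence result directly. Your write-up is considerably more detailed---explicitly handling finiteness via $\Gamma$, the correlated within-batch updates, and the infinite-visitation transfer---but the underlying argument is identical to the paper's one-line proof.
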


\begin{proof}
As the counterfactual experiences generated in CQL are sampled according to the transition probability distribution $P(\langle s^\prime, u^\prime, \boldsymbol{c^\prime} \rangle \; | \; \langle s, u, \boldsymbol{c} \; \rangle, a) = P(s^\prime | s, a)$, the convergence proof provided by \citet{watkins1992q} for tabular Q-learning applies directly to the case of CQL.
\end{proof}

\begin{algorithm}[tb]
\caption{Counterfactual Q-Learning (CQL)} \label{alg:cql}
\label{alg:algorithm}
\textbf{Input}: AAMDP $\mathcal{M} = \langle S_\mathcal{A}, A_\mathcal{A}, T_\mathcal{A}, \gamma_\mathcal{A}, R_\mathcal{A} \rangle$\\
\textbf{Output}: Optimal Q-value function $Q^*$ \\
\begin{algorithmic}[1] 
\STATE Initialise $Q(s, u, \boldsymbol{c}, a) \leftarrow 0 \; \forall \;  \langle s, u, \boldsymbol{c}, a \rangle \in S_\mathcal{A} \times A_\mathcal{A}$

\REPEAT
\STATE Initialise $u \leftarrow u_0$
\STATE Initialise $\boldsymbol{c} \leftarrow \boldsymbol{0}$
\STATE Initialise $s$ as initial environment state

\WHILE{$u \notin F$ and $s$ non-terminal}
    \STATE Sample $a$ from $\langle s, u, \boldsymbol{c} \rangle$ using policy derived from $Q$ (e.g. $\epsilon$-greedy)
    \STATE Execute action $a$, observe $s^\prime$

    \FOR{$u_i \in U$}
        \FOR{$\boldsymbol{c_j} \in \{1,...,\Gamma\}^k$}
            \STATE $\langle u_k, \boldsymbol{c_k} \rangle \leftarrow \delta(u_i, L(s,a,s^\prime),Z(\boldsymbol{c_j}))$
            \STATE $r_k \leftarrow \lambda(u_i, L(s,a,s^\prime), Z(\boldsymbol{c_j}))$

            \IF{$u_k \in F$ or $s^\prime$ is terminal}
                \STATE $Q(s, u_i, \boldsymbol{c_j}, a) \overset{\alpha}{\leftarrow} r_k$
            \ELSE
                \STATE ${Q(s, u_i, \boldsymbol{c_j}, a) \overset{\alpha}{\leftarrow} r_k + \gamma \; \underset{a^\prime \in A}{\text{max}} Q(s^\prime, u_k, \boldsymbol{c_k}, a^\prime)}$ 
            \ENDIF
        \ENDFOR
    \ENDFOR

     \STATE Compute $\langle u^\prime, \boldsymbol{c^\prime} \rangle \leftarrow \delta(u, L(s,a,s^\prime), Z(\boldsymbol{c}))$
     \STATE Set $u \leftarrow u^\prime$
     \STATE Set $\boldsymbol{c} \leftarrow \boldsymbol{c^\prime}$
\ENDWHILE
\UNTIL{end}
\end{algorithmic}
\end{algorithm}

\section{Experimental Evaluation}
In this section, we provide an empirical evaluation of the proposed methods. We begin by considering a \textit{context-free} task specification that cannot be expressed by current state-machine-based approaches. In the following experiment, we compare the sample efficiency of our approach to that of state-of-the-art techniques for a \textit{context-sensitive} task. Finally, we demonstrate how expert knowledge is naturally integrated into our formulation during task specification. An LLM is used to specify the desired task as a regular language which is used to produce the required automaton. A solution is obtained through function approximation. 

\subsection{Task Specifications Beyond Regular Languages}
We use the previously described \textit{LetterEnv} environment to evaluate our methods on a task specification only expressible as a context-free language. The environment is illustrated in Figure \ref{fig:letterenv}. The agent is required to observe a sequence of environment symbols corresponding to a task in the context-free language $L = \{A^NBCD^N : N \in \mathbb{N}\}$. In our experiments, we compare CRA performance to that of state-of-the-art RM-based learning approaches \cite{icarte2022reward}, including CRM and CRM with automated reward shaping (CRM+RS). For each approach, we compute the number of environmental interactions required to learn a policy which solves the task. The results are shown in Figure \ref{fig:letter-res}. As the CRA is able to model the CFL reward directly, one machine can be trained and immediately used to solve all tasks in the specification. This is not true for the RM formulation, which is limited to the expression of reward functions in regular languages. This requires the RM formulation to emulate a CRA. This is achieved through the use of separate RMs for each value of $N$. Notably, while the CRA configuration remains constant across all tasks, the number of states required by an RM grows linearly with task complexity. The results demonstrate that the ability of the CRA to directly model more complex reward functions provides significant improvements in sample efficiency in comparison to CRA emulation through a collection of RMs.

\begin{figure}[t!]
    \centering
    \includegraphics[width=7cm]{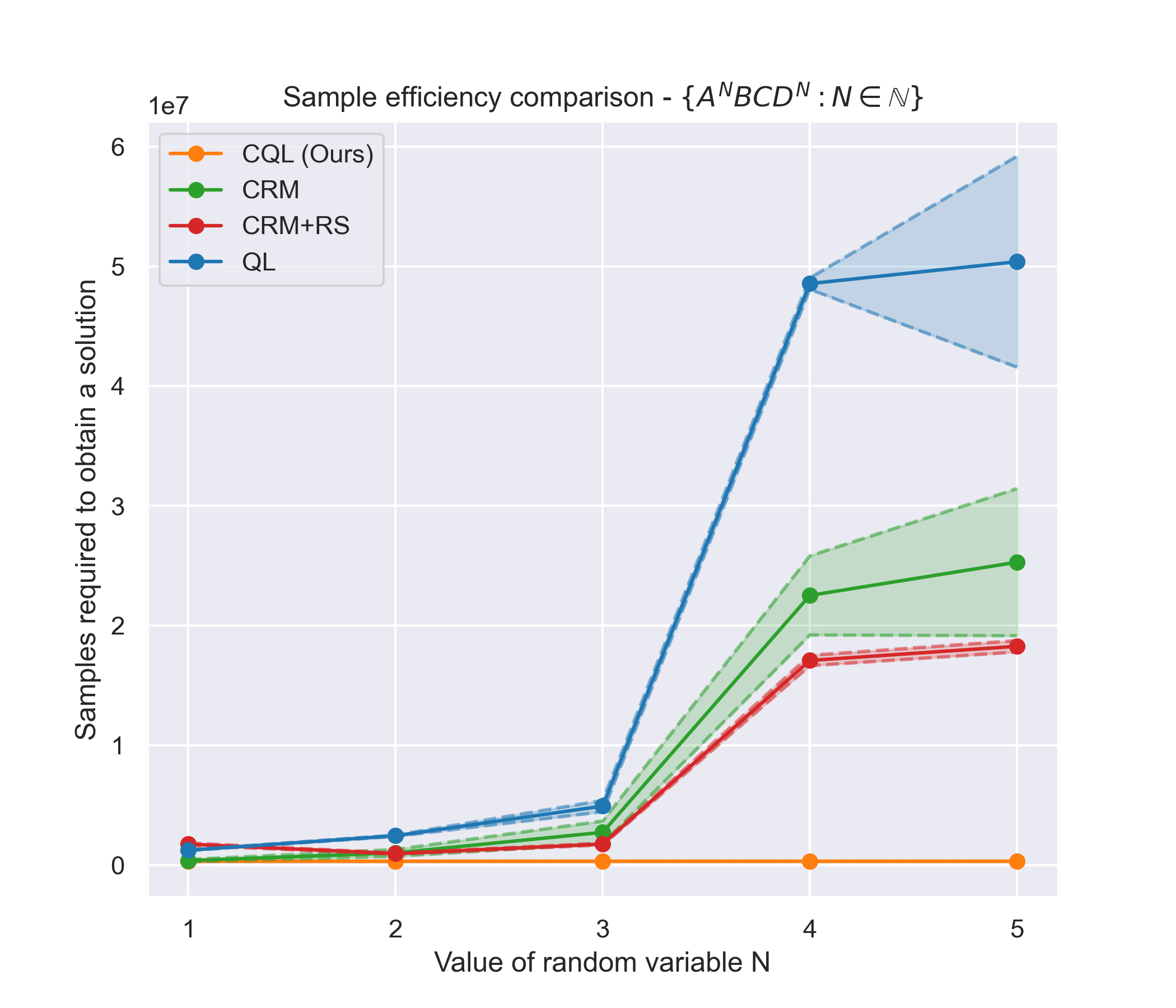}
    \caption{Number of samples required by each approach to obtain a solution to the task. Mean and variance are reported over 40 independent trials. A single CRA can be trained and immediately used to solve all tasks in the specification. For all other approaches, multiple policies must be learned.}
    \label{fig:letter-res}
\end{figure}

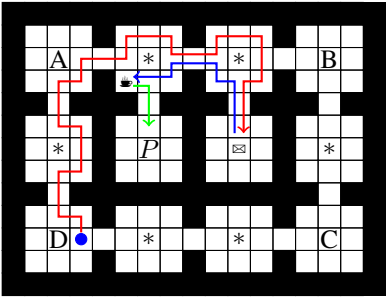
\begin{figure}[t!]
  \centering
  \begin{tikzpicture}[scale=0.3] 

    \fill[black] (0,0) rectangle (16,1);
    \fill[black] (0,4) rectangle (16,5);
    \fill[black] (0,8) rectangle (16,9);
    \fill[black] (0,12) rectangle (16,13);

    \fill[black] (0,0) rectangle (1,13);
    \fill[black] (4,0) rectangle (5,13);
    \fill[black] (8,0) rectangle (9,13);
    \fill[black] (12,0) rectangle (13,13);
    \fill[black] (16,0) rectangle (17,13);

    \fill[white] (2,4) rectangle (3,5);
    \fill[white] (2,8) rectangle (3,9);
    \fill[white] (6,8) rectangle (7,9);
    \fill[white] (10,8) rectangle (11,9);
    \fill[white] (14,8) rectangle (15,9);
    \fill[white] (14,4) rectangle (15,5);

    \fill[white] (4,2) rectangle (5,3);
    \fill[white] (8,2) rectangle (9,3);
    \fill[white] (12,2) rectangle (13,3);

    \fill[white] (4,10) rectangle (5,11);
    \fill[white] (8,10) rectangle (9,11);
    \fill[white] (12,10) rectangle (13,11);

    \filldraw[blue] (3.5,2.5) circle [radius=0.25cm];

    \foreach \x in {0,1,2,3,4,5,6,7,8,9,10,11,12,13,14,15}
      \foreach \y in {0,1,2,3,4,5,6,7,8,9,10,11,12}
        \draw (\x, \y) rectangle ++(1,1);

    \node at (+2.5,+2.5) {D};
    \node at (+2.5,+10.5) {A};
    \node at (+14.5,+10.5) {B};
    \node at (+14.5,+2.5) {C};

    \node at (+6.5,+2.5) {$\ast$};
    \node at (+10.5,+2.5) {$\ast$};
    \node at (+2.5,+6.5) {$\ast$};
    \node at (+14.5,+6.5) {$\ast$};
    \node at (+6.5,+10.5) {$\ast$};
    \node at (+10.5,+10.5) {$\ast$};

    \node at (+6.5,+6.5) {$P$};
    \node at (+5.5,+9.5) {\tiny \Coffeecup};
    \node at (+10.5,+6.5) {\tiny \Letter};

    \draw[->, red, thick] 
        (3.5,2.8) -- 
        (3.5,3.5) --
        (2.5,3.5) --
        (2.5,5.5) -- 
        (3.5,5.5) --
        (3.5,7.5) -- 
        (2.5,7.5) --
        (2.5,9.5) --
        (3.5,9.5) --
        (3.5,10.5) -- 
        (5.5,10.5) --
        (5.5, 11.5) --
        (7.5, 11.5) --
        (7.5, 10.7) --
        (9.5, 10.7) --
        (9.5, 11.5) --
        (11.5, 11.5) -- 
        (11.5, 9.5) --
        (10.7, 9.5) --
        (10.7, 7.2);
    \draw[->, blue, thick] 
        (10.3, 7.2) -- 
        (10.3, 9.5) --
        (9.5, 9.5) --
        (9.5, 10.3) --
        (7.5, 10.3) --
        (7.5, 9.7) --
        (5.8, 9.7);
    \draw[->, green, thick] 
        (5.8, 9.3) -- 
        (6.5, 9.3) --
        (6.5, 7.5);
    
  \end{tikzpicture}
  \caption{Illustration of the \textit{Office Gridworld} presented in \citet{icarte2022reward}. The agent, represented as a blue circle, begins in a fixed location. The agent is able to move in any of the four cardinal directions and its observations are restricted to its current position in the environment. The symbols $\ast$ represent decorations, which are broken if the agent collides with them. Mail can be collected from the location \Letter \, and coffee can be made at location \Coffeecup. A number of people are located at $P$. The trajectory for the context-sensitive task specification is shown} 
  \label{fig:env3}
\end{figure}

\subsection{State Machine Complexity}
We consider the \textit{Office Gridworld} environment illustrated in Figure \ref{fig:env3}. We focus on a task specification that is only expressible as a context-sensitive language. The following task specification is used for this experiment. Firstly, the agent is required to navigate to the mail room and collect all available mail (the amount of mail is randomly generated at the start of each episode). The agent must then make a single coffee for each person that it has collected mail for. Finally, the mail and coffee must be delivered to the office employees located at $P$. The agent immediately fails the task if a decoration is broken or an incorrect number of coffees is collected/delivered. This task requires the agent to remember both the amount of mail it has collected as well as the number of coffees it has made. As a result of this property, this task can only be represented as a context-sensitive language. Due to the memory requirements associated with this task, two counter variables are used in the CRA formulation.

\begin{figure}[b!]
    \centering
    \includegraphics[width=7cm]{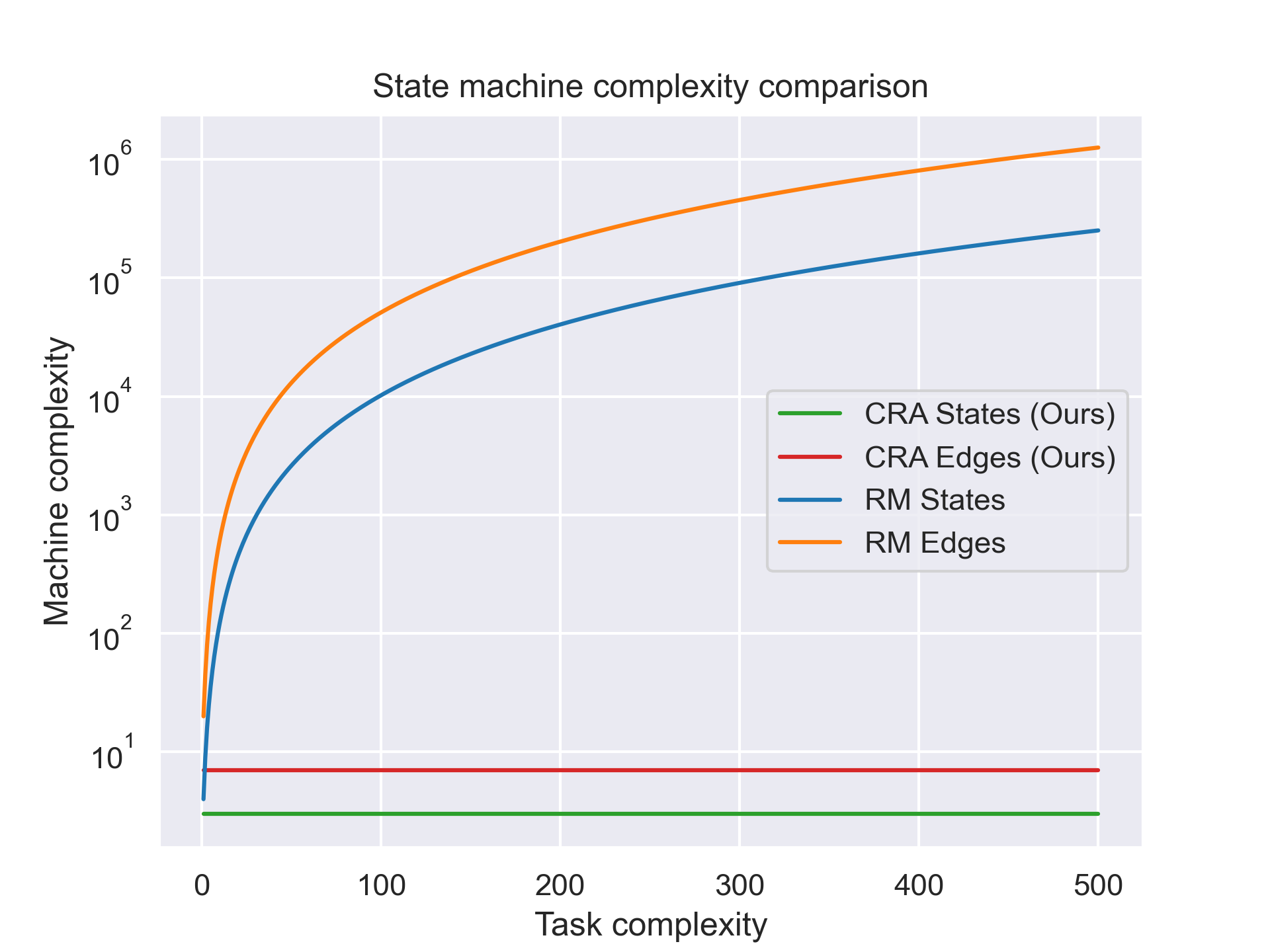}
    \caption{Comparison between the complexity of the state machines produced by the CRA and RM formulations. The illustration shows the complexity of the machine required to solve a task specification with a fixed upper bound on task-string length (in this case, the maximum number of mail items). RMs were constructed using a general template implementation parameterised by the maximum string length required for the task.}
    \label{fig:office-comp}
\end{figure}

\noindent
For this task specification, we note that the CRA formulation, trained with the CQL algorithm, converges to a solution significantly faster than both CRM and CRM with automated reward shaping (CRM+RS) in all cases (see Appendix). However, we focus on the complexity of the state machines produced by either of the approaches. The CRA required to solve the task consists of three intuitive non-terminal states. These states correspond to the tasks: (i) collect all mail; (ii) make coffee and (iii) deliver the collected items. As the reward machine formulation is based on a finite state machine, it is unable to directly express task specifications corresponding to non-regular languages. This requires the reward machine to contain a single state for each CRA \textit{task-counter} pair. We note that as the CRA formulation is able to express the context-sensitive task specification directly---it is therefore able to solve all task strings in the language without modification. This is illustrated in Figure \ref{fig:office-comp}, with the machine configuration remaining constant as the maximum length of a task string is increased. An illustration of the state machines is provided in the Appendix. As existing approaches assume that the automaton is given, this requires that the specification of machines is intuitive and human-readable. We note that this is not the case. Complex algorithms were required to generate the RMs used in these experiments. The state machines required by current approaches are not only difficult to specify, but also to verify for correctness.

\subsection{Integration of Expert Knowledge in Learning}
We now illustrate how expert knowledge can be naturally integrated into our formulation. We make use of the \textit{Office Gridworld} environment to demonstrate our approach. We consider the following task specification. The agent is required to collect exactly one item of mail and deliver it to a person without breaking a decoration. The natural language description of this task is converted to a formal language through the use of an LLM, specifically ChatGPT \cite{openaiBlog}. The LLM is prompted to produce a formal language that specifies the sequence of high-level events satisfying the task description. This formal language description is used to produce the required automaton illustrated in Figure \ref{fig:llm}. The LLM prompts and responses are provided in the Appendix. Solutions to the task specification were obtained through function approximation, replacing Q-learning with \textit{DQN} \cite{mnih2013playing}. Full details of experimental hyperparameters a provided in the Appendix. The results are shown in Figure \ref{fig:deep}.

\begin{figure}[h!]
    \centering
    \includegraphics[width=7cm]{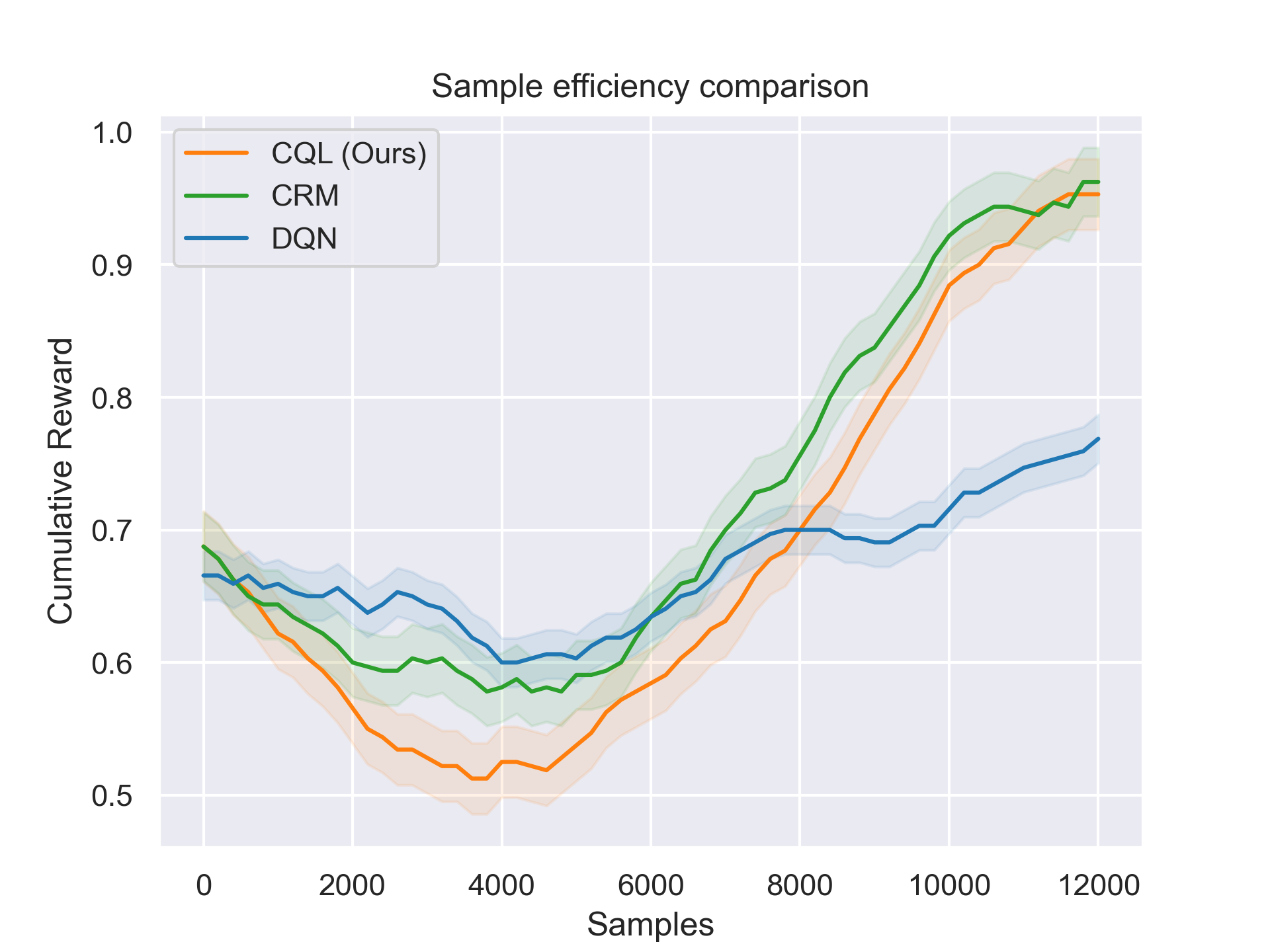}
    \caption{Sample efficiency comparison for natural language tasks solved through function approximation. Mean and variance were reported over 10 independent trials. CQL and CRM exhibit similar performance as the approaches are equivalent for regular language task descriptions.}
    \label{fig:deep}
\end{figure}
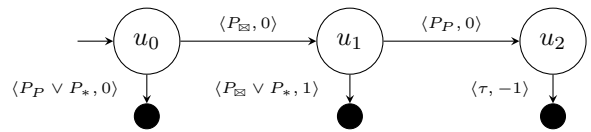
\begin{figure}[h!]

    \centering
    \begin{tikzpicture}[>=stealth, node distance=1cm and 2.7cm, on grid, auto]
        \node[state, initial, initial text = {}] (k1) {$u_0$};
        \node[state, right=of k1] (k2) {$u_1$};
        \node[state, right=of k2] (k3) {$u_2$};
        
        \node[state, below=of k1, minimum size=0.3em, fill=black] (k4) {};
        \node[state, below=of k2, minimum size=0.3em, fill=black] (k5) {};
        \node[state, below=of k3, minimum size=0.3em, fill=black] (k6) {};
        
        \path[->]

        (k1) edge node {\tiny $\langle P_\text{\Letter}, 0 \rangle$} (k2)
        (k2) edge node {\tiny $\langle P_P, 0 \rangle$} (k3)
        (k1) edge 
            node[yshift=0, xshift=-55] {\tiny $\langle P_P \lor P_\ast, 0 \rangle$}
        (k4)
        (k2) edge 
            node[yshift=0, xshift=-55] {\tiny $\langle P_\text{\Letter} \lor P_\ast, 1 \rangle$}
        (k5)
        (k3) edge 
            node[yshift=0, xshift=-35] {\tiny $\langle  \tau, -1 \rangle$}
        (k6);
    \end{tikzpicture}
    \caption{Illustration of a CCRA specified from a natural language task description through the use of an LLM. The propositions $P_s$ are satisfied when the symbol $s$ is encountered. As counter values remain unchanged for regular language tasks, they are not shown to save space.}\label{fig:llm}
\end{figure}

\section{Related Work}
In recent years, formal language and state machine-based approaches for task specification and efficient learning have been extensively studied \cite{littman2017environment,li2017reinforcement,brafman2018ltlf,jothimurugan2019composable}. This research has largely focused on the problems of specification like in \citet{camacho2019ltl} and learning the abstract machines from data like in \citet{abadi2020learning}.  However, due to current formulations' dependence on \textit{Finite State Machines (FSMs)}, they are significantly limited in terms of the types of reward functions they are able to express. This restricts the application of such methods to the small subset of problems which may be represented as \textit{Regular Decision Processes (RDPs)} \cite{brafman2019planning}. This property imposes significant constraints on the classes of problems expressible by current approaches. For example, any problem which involves counting the number of times a state has been reached cannot be solved using these methods. 

Finally, a related number of works have been proposed under the HRL framework for solving temporally extended tasks \cite{barto2003recent,barreto2019option}. Solutions include methods such as the options framework \cite{sutton1999between}, HAMs \cite{parr1997reinforcement} and MAXQ \cite{dietterich2000hierarchical}. \citet{levy2017learning} make use of goal-conditioned policies at multiple layers of the hierarchy for RL. However, all of these approaches have been shown to struggle with exploration \cite{kulkarni2016hierarchical} and reward definition \cite{eysenbach2018diversity} which has limited their application.

\section{Conclusion}
We have proposed the counting reward automata formulation as a universal approach to modelling reward in reinforcement learning systems. As a result of the formulation's dependence on counter machines, it is able to model reward functions expressible in any formal language, unlike previous approaches. Task information is encoded within the abstract machine during the reward modelling process, which may be exploited to enable more sample-efficient learning. Given a counting reward automaton for a task of interest, we have proposed a learning algorithm that can exploit the automaton structure to increase sample efficiency through counterfactual reasoning. The convergence guarantees of this approach have been discussed in the tabular case. Through empirical evaluation, we demonstrate the effectiveness of the proposed formulation both in terms of sample efficiency as well as state machine complexity. Finally, we have demonstrated how expert knowledge can be integrated into the formulation through the use of natural language task descriptions and LLMs.

\bibliography{aaai24}

\onecolumn
\appendix

\section*{Appendix} 
\subsection{A. Counter Machine Example}
In this section, we provide an example illustrating how counter machines operate. We describe how a counter machine can be used to recognise strings belonging to the context-free language $L = \{A^NB^N : N \in \mathbb{N}\}$. The counter machine is illustrated in Figure \ref{fig:counter}. 
\\
\\
\noindent
The machine is illustrated as a directed graph. Each vertex in the graph is a machine state and each edge in the graph is a transition. Associated with each edge is a label $\langle \sigma, \boldsymbol{\omega}, \boldsymbol{\mu} \rangle$ where $\sigma \in \Sigma$ is an input symbol, $\boldsymbol{\omega} \in \{0,1\}$ is the \textit{zero-tested} counter state and $\boldsymbol{\mu} \in \{+m : m \in \mathbb{Z}\}$ is the counter modifier. 
\\
\\
\noindent
We will use an example string $AABB$ to demonstrate how the machine functions. The machine begins in the initial state $u_0$ with a counter value of 0. The input string is processed one symbol at a time. Firstly, the symbol $A$ is read. At this point, the machine's counter state is $[0]$ as the counter value is zero. Upon reading this symbol, the machine transitions, remaining in state $u_0$ and incrementing its counter value by 1. Next, the symbol $A$ is read by the machine. At this point, the counter state is $[1]$ as the counter value is not equal to zero. This causes the machine to transition, remaining in state $u_0$ and incrementing its counter value to 2. The machine then reads the input symbol $B$. The machine transitions to state $u_1$ and decrements the counter value by 1. Finally, the machine reads the input symbol $B$. The machine transitions and decrements the counter value to zero. The input string is accepted as the automaton is in an accepting state with a counter value of zero after processing is complete.

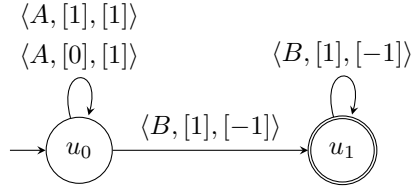
\begin{figure}[H]
    \centering
    \begin{tikzpicture}[>=stealth, node distance=2cm and 3.5cm, on grid, auto]
        \node[state, initial, initial text = {}] (k1) {$u_0$};
        \node[state, accepting, right=of k1] (k2) {$u_1$};
        
        \path[->]
        (k1) edge[loop above] 
            node[yshift=0, xshift=0] {$\langle A, [0], [1] \rangle$} 
            node[yshift=15, xshift=0] {$\langle A, [1], [1] \rangle$} 
        (k1)
        (k1) edge
            node[yshift=0, xshift=0] {$\langle B, [1], [-1] \rangle$} 
        (k2)
        (k2) edge[loop above] 
            node[yshift=0, xshift=0] {$\langle B, [1], [-1] \rangle$} 
        (k2);
    \end{tikzpicture}
    \caption{Illustration of a counter machine that recognises the language $L = \{A^NB^N : N \in \mathbb{N}\}$.}
    \label{fig:counter}
\end{figure}

\subsection{B. Theorem 1.1} \label{app:proof1}
\begin{proof}
Let $\mathcal{A}_\mathcal{P} = \langle U, F, \Sigma, \Delta_c, \delta, \lambda_c, u_0 \rangle$ be a CCRA. 
\\
\\
\noindent
The equivalent CRA is defined as $\mathcal{A}_\mathcal{PSA} = \langle U, F, \Sigma, \Delta_f, \delta, \lambda_f, u_0 \rangle$ where $U, F, \Sigma, \delta$ and $u_0$ are defined as in the CCRA. 
\\
\\
\noindent
The output alphabet of the machine is defined as $\Delta_f = [S \times A \times S \to \mathbb{R}]$.
\\
\\
We now define the output function $\lambda_f$. A machine transition is defined for each triple $\langle u, \sigma, \boldsymbol{\omega} \rangle \in U \times \{ \Sigma \cup \{ \varepsilon \}\} \times \{0,1\}^k$. Upon reading an input symbol, the machine transitions producing $\lambda_c( u, \sigma, \boldsymbol{\omega}) = x$ for some $x \in \mathbb{R}$. To emulate $\lambda_c$, the reward function returned by $\lambda_f$ for the transition is required to return $x$ for all inputs in the set $S \times A \times S$. As a result, $\lambda_f$ may return the function defined as 
\[
    f_x: S \times A \times S \to \{x\}
\]
\[
    f_x(s,a,s^\prime) = x \;\; \forall \; \langle s, a, s^\prime \rangle \in S \times A \times S
\]
In general, 
\[
    \lambda_f( u, \sigma, \boldsymbol{\omega}) = f : S \times A \times S \to \{\lambda_c(u, \sigma, \boldsymbol{\omega})\}
\]
\end{proof}

\pagebreak
\subsection{C. Theorem 1.2}
\begin{proof}
Let $\mathcal{R}_\mathcal{PSA} = \langle U^\mathcal{R}, u_0^\mathcal{R}, F^\mathcal{R}, \delta_u, \delta_r \rangle$ be a reward machine.
\\
\\
\noindent
The number of RM states is equal to $|U^\mathcal{R}|$. By definition, an RM transition exists for each pair $\langle u, \sigma \rangle \in  U^\mathcal{R} \times 2^\mathcal{P}$. It follows that the number of RM transitions is equal to $| U^\mathcal{R} \times 2^\mathcal{P}|$.
\\
\\
\noindent
We now show how to define a CRA capable of emulating the RM. Firstly, to emulate the RM, the CRA does not require the use of its counters. Therefore, the counter values remain $\boldsymbol{0}$ at all times, as they are not modified in any transitions. Secondly, the CRA only transitions when the RM transitions, that is, after reading an input symbol. The CRA is defined as $\mathcal{A}_\mathcal{PSA} = \langle U^\mathcal{A}, F^\mathcal{A}, \Sigma, \Delta, \delta, \lambda, u_0^\mathcal{A} \rangle$ where $U^\mathcal{A} = U^\mathcal{R}$, $F^\mathcal{A} = F^\mathcal{R}$, $u_0^\mathcal{A} = u_0^\mathcal{R}$, $\Sigma = 2^\mathcal{P}$, $\Delta = Im(\delta_r)$,
\[
    \delta(u, \sigma, \boldsymbol{0}) = \langle \delta_u(u, \sigma), \boldsymbol{0} \rangle \;\; \forall \; \langle u, \sigma \rangle \in U^\mathcal{A} \times 2^\mathcal{P}
\]
and
\[
    \lambda(u, \sigma, \boldsymbol{0}) = \delta_r(u) \;\; \forall \; \langle u, \sigma \rangle \in U^\mathcal{A} \times 2^\mathcal{P}
\]
From $U^\mathcal{A} = U^\mathcal{R}$, it follows that the machines have the same number of states. As the domain of the CRA transition function $\delta$ is equal to $U^\mathcal{A} \times 2^\mathcal{P} = U^\mathcal{R} \times 2^\mathcal{P}$, it follows that the machines have the same number of transitions.
\end{proof}

\subsection{D. Pseudo-code Implementation}

\begin{algorithm}[H]
\caption{Tabular Q-Learning for AAMDP} \label{alg:q-learning}
\label{alg:algorithm}
\textbf{Input}: AAMDP $\mathcal{M} = \langle S_\mathcal{A}, A_\mathcal{A}, T_\mathcal{A}, \gamma_\mathcal{A}, R_\mathcal{A} \rangle$\\
\textbf{Output}: Optimal Q-value function $Q^*$ \\
\begin{algorithmic}[1] 
\STATE Initialise $Q(s, u, \boldsymbol{c}, a) \leftarrow 0 \; \forall \;  \langle s, u, \boldsymbol{c}, a \rangle \in S_\mathcal{A} \times A_\mathcal{A}$

\REPEAT
    \STATE Initialise $u \leftarrow u_0$
    \STATE Initialise $\boldsymbol{c} \leftarrow \boldsymbol{0}$
    \STATE Initialise $s$ as initial environment state

    \WHILE{$u \notin F$ and $s$ non-terminal}
        \STATE Sample $a$ from $\langle s, u, \boldsymbol{c} \rangle$ using policy derived from $Q$ (e.g. $\epsilon$-greedy)
        \STATE Execute action $a$, observe $s^\prime$
        \STATE Compute $\langle u^\prime, \boldsymbol{c^\prime} \rangle \leftarrow \delta(u, L(s,a,s^\prime), Z(\boldsymbol{c}))$
        \STATE Compute $r = \lambda(u, L(s,a,s^\prime), Z(\boldsymbol{c}))(s, a, s^\prime)$

        \IF{$u^\prime \in F$ or $s^\prime$ is terminal}
            \STATE $Q(s, u, \boldsymbol{c}, a) \overset{\alpha}{\leftarrow} r$
        \ELSE
            \STATE $Q(s, u, \boldsymbol{c}, a) \overset{\alpha}{\leftarrow} r + \gamma \; \underset{a^\prime \in A}{\text{max}} Q(s^\prime, u^\prime, \boldsymbol{c^\prime}, a^\prime)$
        \ENDIF

        \STATE Set $u \leftarrow u^\prime$
        \STATE Set $\boldsymbol{c} \leftarrow \boldsymbol{c^\prime}$
    \ENDWHILE
\UNTIL{end.}
\end{algorithmic}
\end{algorithm}

\subsection{E. Learning Curves for Office Gridworld}
\begin{figure}[H]
    \centering
    \includegraphics[width=8cm]{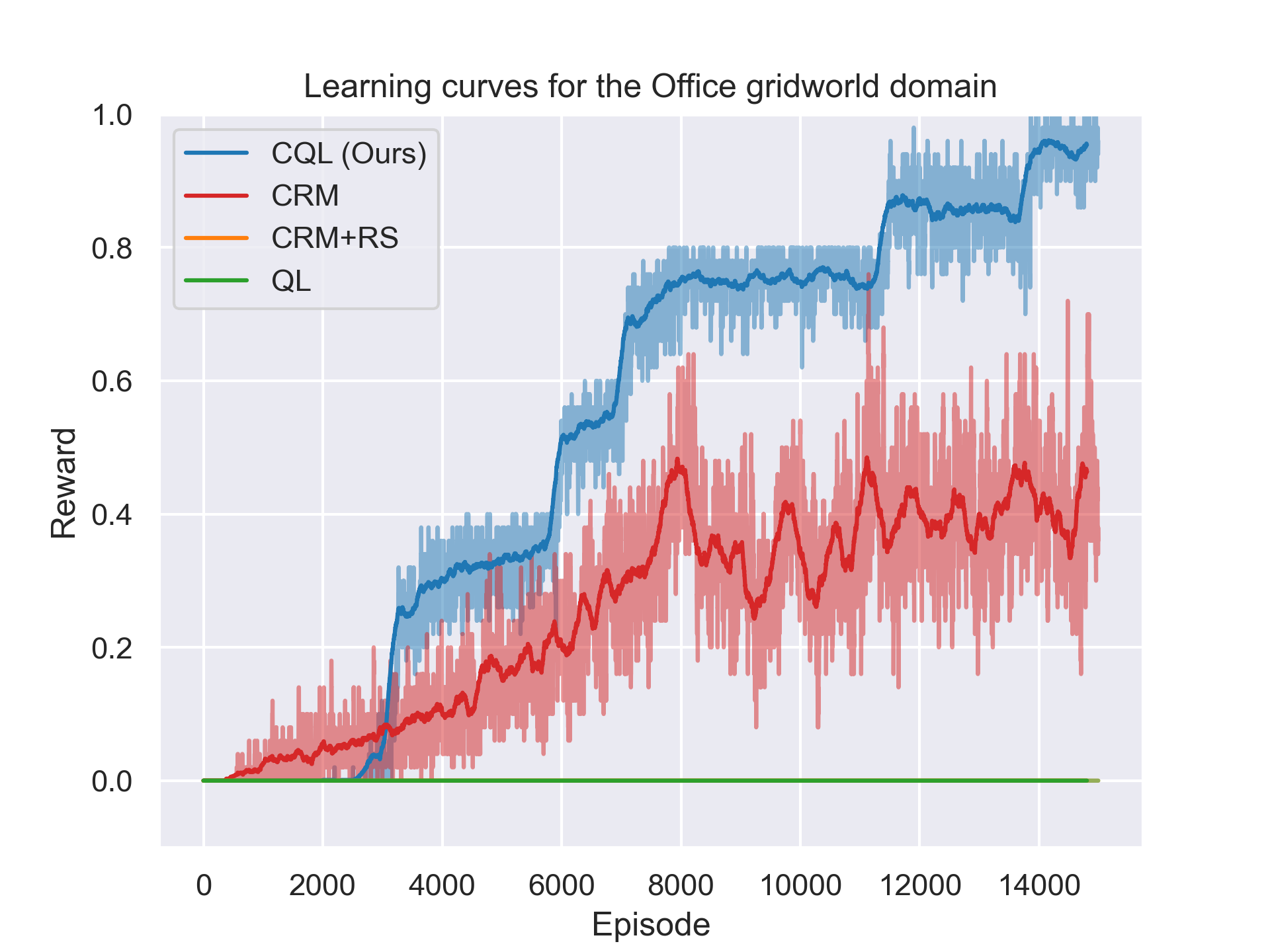}
    \caption{Learning curves for \textit{Office Gridworld}. Mean and variance reported over 5 independent trials.}
\end{figure}

\subsection{F. Visualisation of Office Gridworld state machines}
\begin{figure}[H]
    \centering
    \begin{tikzpicture}[>=stealth, node distance=2cm and 5.5cm, on grid, auto]
        \node[state, initial, initial text = {}] (k1) {Fetch mail};
        \node[state, right=of k1] (k2) {Make coffee};
        \node[state, right=of k2] (k3) {Deliver};
        
        \node[state, below=of k1, minimum size=0.3em, fill=black] (k5) {};
        \node[state, below=of k2, minimum size=0.3em, fill=black] (k6) {};
        \node[state, below=of k3, minimum size=0.3em, fill=black] (k7) {};
        \node[state, right=of k3, minimum size=0.3em, fill=black] (k8) {};
        
        \path[->]
        (k1) edge[loop above] 
            node[yshift=0, xshift=0] {Remaining mail} 
        (k1)
        (k1) edge node {All mail collected} (k2)
        (k2) edge[loop above] 
            node[yshift=0, xshift=0] {Insufficient coffee count} 
        (k2)
        (k2) edge node {Correct coffee count} (k3)
        (k3) edge[loop above] 
            node[yshift=0, xshift=0] {More items in inventory} 
        (k3)
        (k3) edge node {All items delivered} (k8)
        (k1) edge
            node {Decoration broken/task violation}
        (k5)
        (k2) edge
            node {Decoration broken/task violation}
        (k6)
        (k3) edge
            node {Decoration broken/task violation}
        (k7);
    \end{tikzpicture}
    \caption{Illustration of simplified CCRA used to solve the \textit{Office Gridworld} task for \textbf{any} value of $N$.}
\end{figure}
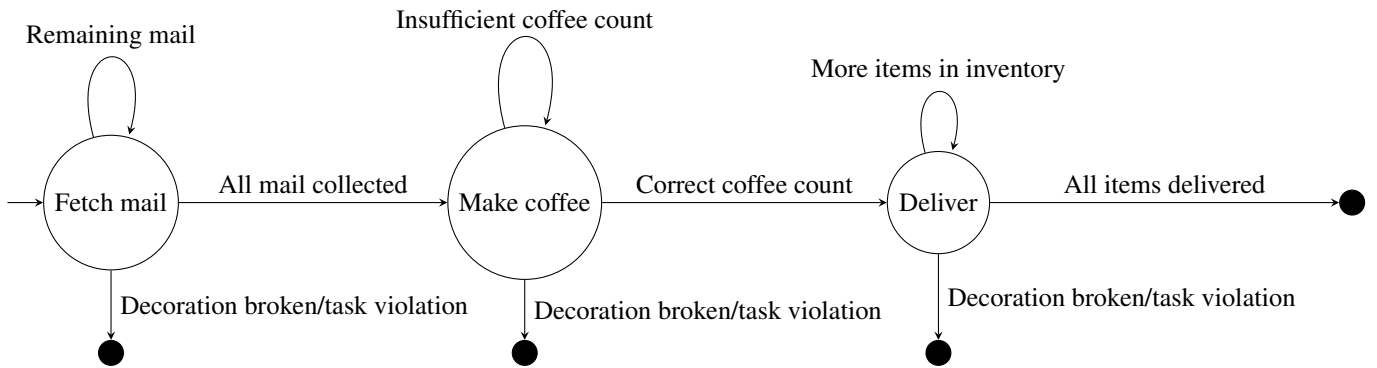

\begin{figure}[H]
    \centering
    \includegraphics[width=15cm]{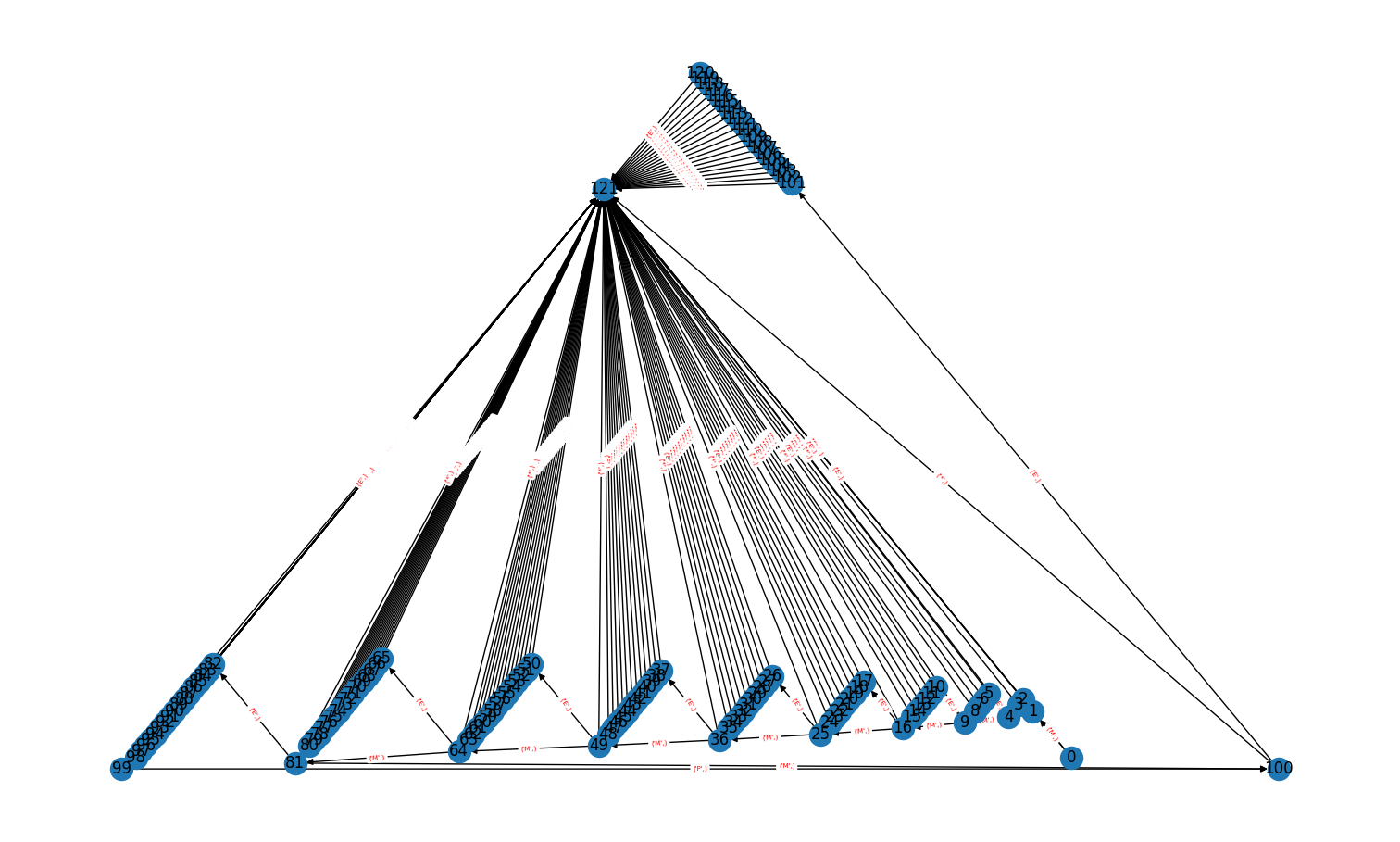}
    \caption{Illustration of RM used to solve the \textit{Office Gridworld} task for $N=10$.}
\end{figure}

\subsection{G. LLM Conversation}
\subsubsection{1. Formal Language Specification}

\usermessage{The following events of interest have been defined: “M” mail is collected;  “P” an item has been delivered to the person; "D" a decoration is broken.
\\
\\
A sequence of events can be represented as a string from a formal language.
\\
\\
Provide the formal language corresponding to the sequence of events: 
Collect exactly one item of mail and deliver a single item of mail to a person. Do not break a decoration at any point in the sequence.}

\botmessage{Let's denote the events as follows:
\begin{itemize}
    \item "M" for collecting mail
    \item "P" for delivering mail to a person
    \item "D" for breaking a decoration 
\end{itemize}
The formal language corresponding to the given sequence of events can be represented as:
\[
    L = \{(MP)^1\}
\]
This language specifies a sequence of events where exactly one "M" (mail collection) is followed by exactly one "P" (mail delivery to a person). The superscript 1 indicates that this sequence occurs exactly once, and there should be no "D" (decoration breaking) event in the sequence.}

\subsubsection{2. Automaton Specification}
\usermessage{Describe an automaton that is able to recognise this language. This automaton should accept any strings if they are in the language and reject any strings if they are not in the language. Describe the transition function for the machine in each state for each input symbol in the alphabet.}

\botmessage{To recognize the language \( L = \{ (MP)^1 \} \), where exactly one "M" is followed by exactly one "P," without any "D" in between, you can design a simple deterministic finite automaton (DFA). The DFA will have three states corresponding to the different phases of the sequence: initial state, mail collected state, and mail delivered state.
\\
\\
The alphabet for this language consists of three symbols: \(\Sigma = \{M, P, D\}\).
\\
\\
Here's the description of the automaton:
\begin{enumerate}
    \item \textbf{States:}
        \begin{itemize}
            \item \(q_0\): Initial state
            \item \(q_1\): Mail collected state
            \item \(q_2\): Mail delivered state (Accepting state)
        \end{itemize}
    \item \textbf{Transition function:}
        \begin{itemize}
            \item Transition from the initial state \(q_0\):
                \begin{itemize}
                    \item On input \(M\), move to state \(q_1\) (mail collected).
                    \item On input \(P\) or \(D\), go to a trap state \(q_{\text{trap}}\) (a non-accepting state indicating an invalid sequence).
                \end{itemize}

            \item Transition from the mail collected state \(q_1\):
                \begin{itemize}
                    \item On input \(P\), move to state \(q_2\) (mail delivered, accepting state).
                    \item On input \(M\) or \(D\), go to \(q_{\text{trap}}\).
                \end{itemize}

            \item Transition from the mail delivered state \(q_2\):
                \begin{itemize}
                    \item On any input symbol (\(M, P, D\)), go to \(q_{\text{trap}}\) since the sequence is complete.
                \end{itemize}

            \item Transition from the trap state \(q_{\text{trap}}\):
                \begin{itemize}
                    \item On any input symbol (\(M, P, D\)), remain in \(q_{\text{trap}}\).
                \end{itemize}
        \end{itemize}
    \item \textbf{Accepting states:}
        \begin{itemize}
            \item The only accepting state is \(q_2\) (mail delivered state).
        \end{itemize}
\end{enumerate}
This DFA ensures that exactly one "M" is followed by exactly one "P," and any deviation results in the automaton entering a non-accepting trap state.
\\
\\
Here's a formal representation of the transition function:
\begin{align*}
\delta(q_0, M) &\rightarrow q_1 \\
\delta(q_0, P) &\rightarrow q_{\text{trap}} \\
\delta(q_0, D) &\rightarrow q_{\text{trap}} \\
\delta(q_1, M) &\rightarrow q_{\text{trap}} \\
\delta(q_1, P) &\rightarrow q_2 \\
\delta(q_1, D) &\rightarrow q_{\text{trap}} \\
\delta(q_2, M) &\rightarrow q_{\text{trap}} \\
\delta(q_2, P) &\rightarrow q_{\text{trap}} \\
\delta(q_2, D) &\rightarrow q_{\text{trap}} \\
\delta(q_{\text{trap}}, M) &\rightarrow q_{\text{trap}} \\
\delta(q_{\text{trap}}, P) &\rightarrow q_{\text{trap}} \\
\delta(q_{\text{trap}}, D) &\rightarrow q_{\text{trap}} 
\end{align*}
Note: In the transition function, $\delta(q, a) \rightarrow q'$ denotes that on input symbol $a$ in state $q$, the automaton transitions to state $q'$.
}

\subsection{H. Function Approximation Hyperparamters}
\begin{itemize}
    \item We use feed-forward networks with 2 hidden layers.
    \item Hidden layers contain 128 neurons and make use of the rectified linear unit (ReLU) as an activation function.
    \item Networks are trained for 12,000 environmental interactions with a learning rate of $\eta = 10^{-4}$.
    \item On every optimisation step the Q-functions are using $32N$ sampled experiences from the replay buffer of size 50,000 where $N=1$ for DQN and $N=|U|$ for CRM and CQL.
\end{itemize}

\end{document}